\theoremstyle{plain}
	\newtheorem{thm}{\protect\theoremname}
	\newtheorem{thm}{\protect\theoremname}[chapter]
\theoremstyle{plain}
\newtheorem{lem}[thm]{\protect\lemmaname}
\newcommand{\indep}{\perp \!\!\! \perp}
\newcommand{\argmax}{\arg\,\max}
\providecommand{\lemmaname}{Lemma}
\providecommand{\theoremname}{Theorem}
\begin{document}
\title{Conditional Independence Testing via\\Latent Representation Learning}
\author{\textbf{Bao Duong, Thin Nguyen}\\
Applied Artificial Intelligence Institute (A\textsuperscript{2}I\textsuperscript{2}),
Deakin University, Australia\\
Email\emph{: \{duongng,thin.nguyen\}@deakin.edu.au}}
\maketitle
\begin{abstract}
Detecting conditional independencies plays a key role in several statistical
and machine learning tasks, especially in causal discovery algorithms.
In this study, we introduce \textbf{LCIT} (\textbf{L}atent representation
based \textbf{C}onditional \textbf{I}ndependence \textbf{T}est)--a
novel non-parametric method for conditional independence testing based
on representation learning. Our main contribution involves proposing
a generative framework in which to test for the independence between
$X$ and $Y$ given $Z$, we first learn to infer the latent representations
of target variables $X$ and $Y$ that contain no information about
the conditioning variable $Z$. The latent variables are then investigated
for any significant remaining dependencies, which can be performed
using the conventional partial correlation test. The empirical evaluations
show that \textbf{LCIT} outperforms several state-of-the-art baselines
consistently under different evaluation metrics, and is able to adapt
really well to both non-linear and high-dimensional settings on a
diverse collection of synthetic and real data sets.

\end{abstract}

\begin{IEEEkeywords}
conditional independence, hypothesis testing, representation learning,
generative models, normalizing flows
\end{IEEEkeywords}

\section{Introduction}

\begin{figure*}[t]

\centering{}\includegraphics[width=1\textwidth]{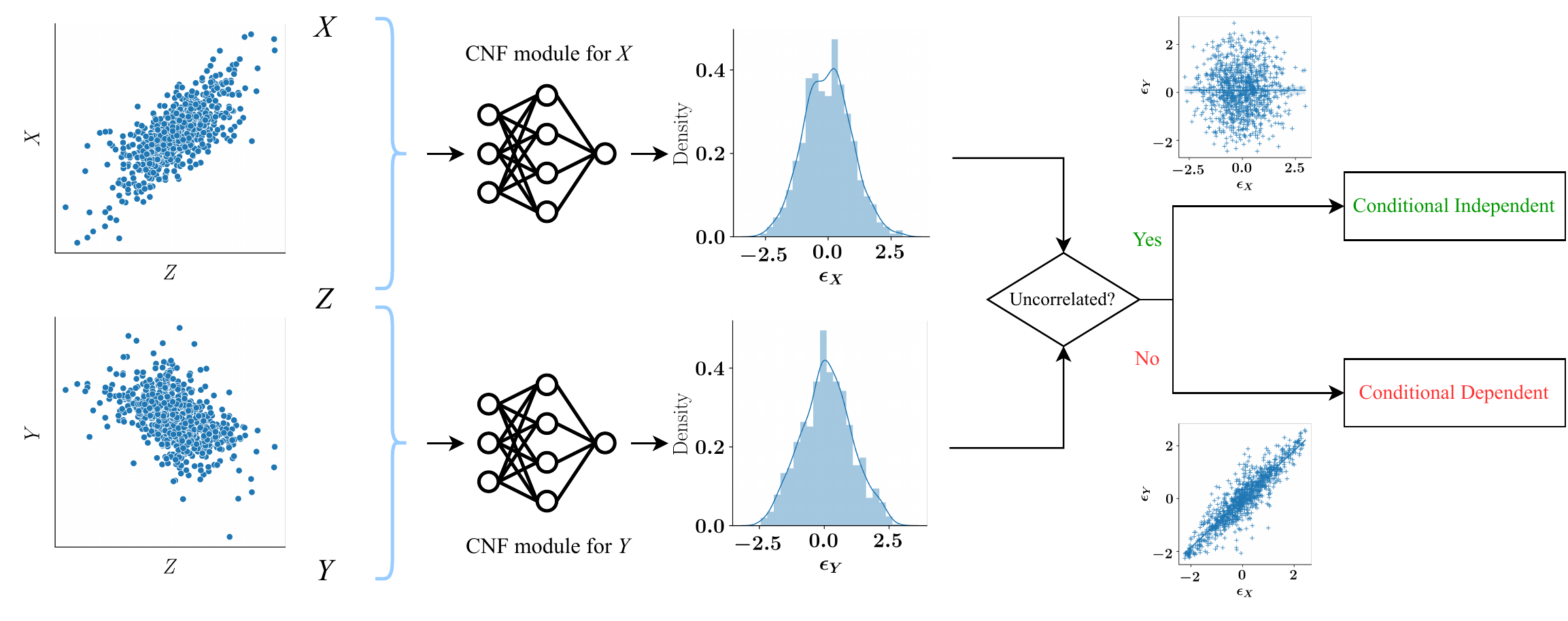}\caption{The proposed \textbf{L}atent based \textbf{C}onditional \textbf{I}ndependence
\textbf{T}est (\textbf{LCIT}) framework. First, $X$ and $Y$ are
transformed into respective latent spaces using two Conditional Normalizing
Flows (CNF) modules independently learned from samples of $\left(X,Z\right)$
and $\left(Y,Z\right)$, respectively. The latent variables ($\epsilon_{X}$
and $\epsilon_{Y}$), which have standard Gaussian distributions by
design, are then used as inputs for the conventional correlation test.
If $\epsilon_{X}$ and $\epsilon_{Y}$ are indeed uncorrelated then
we accept the null hypothesis ($\mathcal{H}_{0}:X\indep Y|Z$), otherwise
we reject the null hypothesis and accept the alternative ($\mathcal{H}_{1}:X\protect\not\indep Y|Z$).\label{fig:illustration}}
\end{figure*}

Conditional independence (CI) tests concern with the problem of testing
if two random variables $X$ and $Y$ are statistically independent
after removing all the influences coming from the conditioning variables
$Z$ (denoted as $X\indep Y|Z$), using the empirical samples from
their joint distribution $p(X,Y,Z)$. More formally, we consider the
hypothesis testing setting with:

\begin{align*}
\text{Null hypothesis }\mathcal{H}_{0} & :X\indep Y|Z\\
\text{Alternative hypothesis }\mathcal{H}_{1} & :X\not\indep Y|Z
\end{align*}

With this functionality, CI tests have been extensively leveraged
as the backbone of causal discovery algorithms which aim to disassemble
the causal interrelationships embeded in the observed data. More specifically,
in constraint-based causal discovery methods such as the PC algorithm
and its variants \cite{spirtes1991algorithm,spirtes2000causation},
CI tests are used to detect if each pair of variables has an intrinsic
relationship that cannot be explained by any other intermediate variables,
and so forth connectivity of those who do not share this kind of relation
are progressively removed. The outputs from such methods are extremely
valuable in many scientific sectors such as econometrics \cite{hunermund2019causal},
neuroscience \cite{cao2019causal}, machine learning \cite{scholkopf2016causal,scholkopf2021toward,scholkopf2022statistical},
and especially bioinformatics \cite{sachs2005causal,zhang2013integrated}
where the behavioral links between genes, mutations, diseases, etc.
are in the heart of curiosity.

Here we consider the continuous instance of the problem where $X$,
$Y$, and $Z$ are real-valued random vectors or variables, which
is significantly harder than the discrete case in general. This is
because discrete probabilistic quantities are usually more tractable
to compute, in contrary with their continuous counterparts. In fact,
many methods must resort to binning continuous variables into discrete
values for their tests \cite{su2008nonparametric,diakonikolas2016new,warren2021wasserstein}.
However, this technique is prone to loss of information and is erroneous
in high dimensions due to the curse of dimensionality. This also highlights
the inherent difficulty of CI testing in continuous domain.

From the technical perspective, CI tests can be roughly categorized
into four major groups based on their conceptual essence, including
distance-based methods, kernel-based methods, regression-based methods,
and model-based methods. In distance-based methods \cite{su2008nonparametric,etesami2017new,warren2021wasserstein},
the direct characterization of CI, $p\left(x|z\right)p\left(y|z\right)=p\left(x,y|z\right)$
or $p(x|y,z)=p(x|z)$, is exploited and methods in this class aim
to explicitly measure the difference between respective probability
densities or distributions. These methods usually employ discretization,
which has been discussed to be faulty as the data dimensionality increases.

Next, kernel-based methods \cite{fukumizu2007kernel,zhang2012kernel,doran2014permutation,strobl2019approximate}
adopt kernels to exploit high order properties of the distributions
via sample-wise similarities in higher spaces. More concretely, variables
are mapped into reproducing kernel Hilbert spaces (RKHS), where their
conditional independence can be reflected by the partial uncorrelatedness
of functions in the RKHS. However, as noted in \cite{ramdas2015decreasing},
the performance of kernel-based methods may polynomially deteriorate
w.r.t. the dimensionality of data as well.

The next group of CI tests is regression-based methods \cite{grosse2016identification,zhang2017causal,zhang2021testing,zhang2022residual},
which transform the original CI testing problem into a more manageable
problem of marginal independence testing between the regression residuals.
Typically, these methods suppose that $Z$ is a confounder set of
$X$ and $Y$, meaning it causes both $X$ and $Y$ in the underlying
generation process, and the generating functions are additive noise
models (ANMs), so that the residuals can be fully retrieved without
any remaining information from $Z$ by using suitable regression functions.
While being simple, the application of regression-based methods is
quite limited since in general there is a high chance that $Z$ is
not a cause of $X$ or $Y$, or the generating mechanisms may have
non-additive noises.

Finally, the last group of CI methods contain a diverse set of approaches
leveraging learning algorithms as the main technical device. For example,
in \cite{runge2018conditional,mukherjee2020ccmi,kubkowski2021gain}
the conditional mutual information (CMI) between $X$ and $Y$ given
$Z$ is considered as the test statistic and estimated using $k$-nearest
neighbors estimators. Additionally, methods utilizing generative adversarial
neural networks (GAN) \cite{bellot2019conditional,shi2021double}
have also been proposed where new samples are generated to help simulate
the null distributions of the test statistics. Also, in \cite{sen2017model}
the use of supervised classifiers is harnessed to differentiate between
conditional independence and conditional dependence samples.

\textbf{Present~study.}\quad{}In this study we offer a novel approach
to test for conditional independence motivated by regression-based
methods and executed via latent representation learning with generative
models. Our proposed method, called \textbf{L}atent representation
based \textbf{C}onditional \textbf{I}ndependence \textbf{T}esting
(\textbf{LCIT})\footnote{Source code and related data are available at \url{https://github.com/baosws/LCIT}},
learns to infer the latent representations of $X$ and $Y$ conditioned
on $Z$, then tests for their unconditional independence.

To infer the latent variable, we make use of Normalizing Flows (NF)
\cite{tabak2010density}, which is a subset of generative modeling
methods capable of transforming any simple distribution into much
more complex distributions via a sequence of bijective maps. At the
same time, we can also invert variables with highly complex densities
into more manageable distributions through the inverse mode of the
flows, thanks to the bijective maps.

In comparison with other branches of generative methods, while GANs
\cite{goodfellow2014generative} are able to generate high quality
samples but do not offer the ability to infer latent representations;
and Variational Auto-encoders (VAEs) \cite{kingma2013auto} can both
generate new samples as well as infer latent variables but the information
may be loss due to non-vanishing reconstruction errors; the bijective
map in NF is perfectly fit to our methodological design due to its
ability to infer latent representations without loss of information.
The importance of the information preservation will become apparent
later in Section\textbf{~}\ref{sec:Methodology}. Moreover, since
NF can be parametrized with neural networks, we can benefit vastly
from their expressiveness that allows them to learn any distributions
with arbitrary precision.

In addition, while many approaches in the literature of conditional
independence testing require bootstrap or permutation to estimate
the $p$-value since their test statistics have non-trivial null distributions,
e.g., \cite{zhang2012kernel,runge2018conditional,bellot2019conditional,mukherjee2020ccmi,shi2021double,zhang2021testing,zhang2022residual},
\textbf{LCIT} is able to estimate the approximate $p$-value without
data resampling due to the latent variables being jointly Gaussian.
We demonstrate the effectiveness of our method in conditional independence
testing with extensive numerical simulations, as well as in real datasets.
The empirical results show that \textbf{LCIT} outperforms existing
state-of-the-art methods in a consistent manner.

\textbf{Contributions.}\quad{}This paper offers three key contributions
summarized as follows:
\begin{enumerate}
\item We present a conversation of conditional independence into unconditional
independence of ``latent'' variables via invertible transformations
of target variables, adjusted on the conditioning set. This characterization
allows us to operate with the conventional marginal independence testing
problem which is less challenging than the original conditional problem.
\item We introduce a new conditional independence testing algorithm, called
\textbf{L}atent representation based \textbf{C}onditional \textbf{I}ndependence
\textbf{T}esting (\textbf{LCIT}), which harnesses a Conditional Normalizing
Flows framework to convert target variables into corresponding latent
representations, where their uncorrelatedness entails conditional
independence in the original space. To the best of our knowledge,
this is the first time Normalizing Flows is applied into the problem
of conditional independence testing. Additionally, deviating from
many of the existing non-parametric tests, \textbf{LCIT} can estimate
exactly the $p$-value which is computationally cheaper than methods
involving bootstrapping or permutation. See Figure~\ref{fig:illustration}
for an illustration of our framework. 
\item We demonstrate the effectiveness of the proposed \textbf{LCIT} method
with a comparison against various state-of-the-art baselines on both
synthetic and real datasets, where \textbf{LCIT} is showed to outperform
other methods in several metrics.
\end{enumerate}
\textbf{Paper~organization.}\quad{}In the following parts of the
paper, we first briefly highlight major state-of-the-art ideas in
the literature of conditional independence hypothesis testing in Section\textbf{~}\ref{sec:Background}.
Then, in Section\textbf{~}\ref{sec:Methodology} we describe in detail
our proposed \textbf{LCIT} method for conditional independence testing.
Next, in Section\textbf{~}\ref{sec:Experiments} we perform experiments
to demonstrate the strength of \textbf{LCIT} in both synthetic and
real data settings. Finally, the paper is concluded with a summary
and suggestions for future developments in Section\textbf{~}\ref{sec:Conclusion-and-Future}.

\section{Related Works\label{sec:Background}}

\inputencoding{latin9}Based on the technical device, CI test designs
can be practically clustered into four main groups: distance-based
methods \cite{su2008nonparametric,etesami2017new,warren2021wasserstein},
kernel-based methods \cite{fukumizu2007kernel,zhang2012kernel,doran2014permutation,strobl2019approximate},
regression-based methods \cite{zhang2022residual}, and lastly model-based
methods \cite{runge2018conditional,bellot2019conditional,mukherjee2020ccmi,shi2021double}.

\subsection{Distance-based~methods}

Starting from the most common definition of conditional independence--$X\indep Y|Z$
if and only if $p\left(x|z\right)p\left(y|z\right)=p\left(x,y|z\right)$
(or equivalently, $p(x|y,z)=p(x|z)$) for every realizations $y$
and $z$ of $Y$ and $Z$, respectively--early methods set the first
building blocks by explicitly estimating and comparing the relevant
densities or distributions.

For example, \cite{su2008nonparametric} measure the distance between
two conditionals $p\left(x|y,z\right)$ and $p\left(x|z\right)$ using
the weighted Hellinger distance. Following this direction, \cite{etesami2017new}
devise a new conditional dependence measure equal to the supremum
of the Wasserstein distance between $p(X|y,z)$ and $p(X|y',z)$ over
all realizations $y,y',z$.

Additionally, in \cite{warren2021wasserstein} the Wasserstein distance
between two conditional distributions $p\left(X,Y|z\right)$ and $p\left(X|z\right)\otimes p\left(Y|z\right)$
is measured for each discretized value of $z$.

\subsection{Kernel-based~methods}

Many other approaches avoid the difficulties in estimating conditional
densities with alternative characterizations of conditional independence.
Particularly, when $X,Y,Z$ are jointly multivariate normal then the
conditional independence $X\indep Y|Z$ reduces to the vanish of the
partial correlation coefficient $\rho_{XY\cdot Z}$ \cite{baba2004partial},
which is easy to test for zero since its Fisher transformation follows
an approximately normal distribution under the null \cite{hotelling1953new}.

Departing from that, a large body of works has focused on kernel methods
\cite{fukumizu2007kernel,zhang2012kernel,doran2014permutation,strobl2019approximate},
which can be perceived as a non-parametric generalization of the connection
between partial uncorrelatedness and conditional independence of Gaussian
variables \cite{zhang2012kernel}. These methods follow the CI characterization
established in \cite{fukumizu2004dimensionality} where the conditional
independence is expressed in terms of conditional cross-covariance
operators of functions in the reproducing kernel Hilbert spaces (RKHS):
$\Sigma_{YX|Z}:=\Sigma_{YX}-\Sigma_{YZ}\Sigma_{ZZ}^{-1}\Sigma_{ZX}$
where the cross-covariance operator is defined as $\left\langle g,\Sigma_{YX}f\right\rangle :=\mathbb{E}_{XY}\left[f\left(X\right)g\left(Y\right)\right]-\mathbb{E}_{X}\left[f\left(X\right)\right]\mathbb{E}_{Y}\left[g\left(Y\right)\right]$
with $f,g$ being respectively functions in RKHS of $X$ and $Y$.

This can be interpreted as the generalization of the conventional
partial covariance. In what follows, conditional independence is achieved
if and only if the conditional cross-covariance is zero. For this
reason, in \cite{fukumizu2007kernel} the Hilbert-Schmidt norm of
the partial cross-covariance is tested against zero for the null hypothesis.

\subsection{Regression-based~methods}

Regression-based CI tests \cite{zhang2017causal,zhang2021testing,zhang2022residual}
assume that $Z$ is a confounder set of $X$ and $Y$, as well as
the relationships between $Z$ and $X$/$Y$ are additive noise models
(e.g., $X:=f\left(Z\right)+E,\;Z\indep E$). Therefore, by the use
of a suitable regression function, we can remove all the information
from $Z$ embeded in $X$/$Y$ by simply subtracting the regression
function, i.e., $r_{X}:=X-\hat{f}\left(Z\right)$ and $r_{Y}:=Y-\hat{g}\left(Z\right)$.
After this procedure, the conditional independence $X\indep Y|Z$
can be simplified to $r_{X}\indep r_{Y}$. Alternatively, in \cite{grosse2016identification}
the hypothesis testing problem $X\indep Y|Z$ is converted into $X-\hat{f}\left(Z\right)\indep\left(Y,Z\right)$.
Meanwhile, the problem is transformed into testing for $X-\hat{f}\left(Z\right)\indep\left(Y-\hat{g}\left(Z\right),Z\right)$
in \cite{zhang2017causal}.

Nonetheless, the assumptions required by regression-based methods
are relatively strong overall. As proved in \cite{hoyer2008nonlinear},
in general, we can only obtain independent residuals (w.r.t. $Z$)
if regression is performed against the true ``cause'' in the data
generating process, i.e., if data is generated as $Z:=f(X)+E,\;X\indep E$
then $X-\mathbb{E}\text{\ensuremath{\left[X|Z\right]}}$ typically
still depends on $Z$.

Additionally, the additive noise model assumption is also easy to
be violated. If the generating process involves non-additive noises
then the regression residuals can still be dependent on $Z$, making
the equivalence $X\indep Y|Z\Leftrightarrow r_{X}\indep r_{Y}$ invalid.

\subsection{Model-based~approaches}

Apart from the aforementioned methods, model-based approaches more
heavily borrow supervised and unsupervised learning algorithms as
the basis. For instance, recently GANs have been employed \cite{bellot2019conditional,shi2021double}
to implicitly learn to sample from the conditionals $p(X|Z)$ and
$p(Y|Z)$.

In \cite{bellot2019conditional} the main motivation is that for any
dependence measure $\rho$ and $\tilde{X}\sim p\left(X|Z\right)$
with $Y\indep\tilde{X}$, under both hypotheses, $\rho\left(X,Y,Z\right)\geq\rho\left(\tilde{X},Y,Z\right)$
and the equality occurs only under $\mathcal{H}_{0}$. This key observation
motivates the authors to employ GANs to learn the generator for $p\left(X|Z\right)$.
Then, the test's $p$-value can be empirically estimated by repeatedly
sampling $\tilde{X}$ and calculating the dependence measure without
any knowledge of the null distribution. Similarly, the double GAN
approach \cite{shi2021double} goes one step further--two generators
are used in order to learn both $p\left(X|Z\right)$ and $p\left(Y|Z\right)$,
then the test statistic is calculated as the maximum of the generalized
covariance measures of multiple transformation functions.

Differently from those, since conditional independence coincides with
zero conditional mutual information (CMI), which is a natural and
well-known measure of conditional dependence, many methods aim to
estimate CMI as the test statistic \cite{runge2018conditional,mukherjee2020ccmi,kubkowski2021gain}.
In \cite{runge2018conditional}, the CMI is approximated via several
$k$-nearest neighbors entropy estimators \cite{kozachenko1987sample}
and the test statistic is empirically estimated by randomly shuffling
samples of $X$ in a way that preserves $p\left(X|Z\right)$ while
breaking the conditional dependence between $X$ and $Y$ given $Z$.

As an extension to \cite{runge2018conditional}, since CMI estimators
are erroneous in high dimensions which is followed by the failed tests,
in \cite{kubkowski2021gain} CMI is replaced with the short expansion
of CMI, computed via the Möbius representation of CMI, which offers
simple asymptotic null distributions that allow for an easier construction
of the conditional independence test. Additionally, \cite{mukherjee2020ccmi}
propose an classifier-based estimator for the Kullback-Leibler divergence
to measure the divergence of $p\left(X,Y|Z\right)$ and $p\left(X|Z\right)\otimes p\left(Y|Z\right)$,
which is very closely related to CMI.

Following a slightly similar approach, the classification-based CI
test \cite{sen2017model} reduces the CI testing problem into a binary
classification problem, where the central idea is that under the null
hypothesis, a binary classifier cannot differentiate between the original
dataset and a shuffled dataset that forces the conditional independence
$X\indep Y|Z$; whereas the difference under the alternative hypothesis
can be easily captured by the classifier.

\subsection{Our approach}

Our \textbf{LCIT} method departs from regression-based methods in
the sense that it does not require any of the limited assumptions
supposed by these methods. More particularly, we devise a ``generalized
residual'', referred to as a ``latent representation'', such that
it is independent from $Z$ without assuming $Z$ is a confounder
of $X$ and $Y$ nor the relationships are additive noise models.

Moreover, while our method employs generative models, it approaches
the CI testing problem from an entirely different angle--instead
of learning to generate randomized samples as in GAN-based methods
\cite{bellot2019conditional,shi2021double}, we explicitly learn the
deterministic inner representations of $X$ and $Y$ through invertible
transformations of NFs, so that we can directly check for their independence
instead of adopting bootstrapping procedures.

\section{Latent representation based Conditional Independence Testing\label{sec:Methodology}}

\inputencoding{latin9}Let $X,Y\in\mathbb{R}$ and $Z\in\mathbb{R}^{d}$
be our random variables and vectors where we wish to test for $X\indep Y|Z$
and $d$ is the number of dimensions of $Z$. In regard of $X$ and
$Y$ being limited to scalars instead of vectors, this is due to the
fact that in the majority of applications of conditional independence
testing, which include causal discovery tasks, we typically care about
the dependence of each pair of univariate variables given other variables.
Additionally, according to the Decomposition and Union rules of the
probabilistic graphoids \cite{pearl1986graphoids}, the conditional
independence between two sets of variables given the third set of
variables can be factorized into a series of conditional independencies
between pairs of univariate variables. Therefore, for simplicity,
this work focuses on real-valued $X$ and $Y$ exclusively.

\subsection{A New Characterization of Conditional Independence}

We start by giving the fundamental observation that drives the development
of our method:
\begin{lem}
\label{prop:Assuming-the-generative}Assuming the generative process
can be represented as $X=f\left(\epsilon_{X},Z\right)$ and $Y=g\left(\epsilon_{Y},Z\right)$
where $\left(\epsilon_{X},\epsilon_{Y}\right)\indep Z$ and $f,g$
are invertible functions w.r.t. their first argument, then
\begin{equation}
X\indep Y|Z\Leftrightarrow\epsilon_{X}\indep\epsilon_{Y}\label{eq:proxy-test}
\end{equation}
\end{lem}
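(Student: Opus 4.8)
The plan is to exploit the fact that, conditionally on $Z=z$, each of $X$ and $Y$ is obtained from $\epsilon_X$ and $\epsilon_Y$ by applying a fixed bijection, together with the elementary principle that applying a measurable bijection to each coordinate neither creates nor destroys independence. Concretely, for a fixed value $z$ in the support of $Z$ I would write $f_z:=f(\cdot,z)$ and $g_z:=g(\cdot,z)$, which are invertible by hypothesis. Since $(\epsilon_X,\epsilon_Y)\indep Z$, the regular conditional law of $(\epsilon_X,\epsilon_Y)$ given $Z=z$ coincides with its unconditional law for $P_Z$-almost every $z$; and because $X=f_z(\epsilon_X)$ and $Y=g_z(\epsilon_Y)$ on $\{Z=z\}$, the conditional law of $(X,Y)$ given $Z=z$ is the pushforward of the law of $(\epsilon_X,\epsilon_Y)$ under the product map $(f_z,g_z)$.

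From here I would handle both directions at once via test functions. For bounded measurable $\phi,\psi$ and $P_Z$-a.e.\ $z$,
\begin{equation*}
\mathbb{E}[\phi(X)\psi(Y)\mid Z=z]=\mathbb{E}[(\phi\circ f_z)(\epsilon_X)\,(\psi\circ g_z)(\epsilon_Y)],
\end{equation*}
and similarly $\mathbb{E}[\phi(X)\mid Z=z]=\mathbb{E}[(\phi\circ f_z)(\epsilon_X)]$ and $\mathbb{E}[\psi(Y)\mid Z=z]=\mathbb{E}[(\psi\circ g_z)(\epsilon_Y)]$. The implication $\epsilon_X\indep\epsilon_Y\Rightarrow X\indep Y\mid Z$ is then immediate, since the right-hand sides factor for every $z$. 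For the converse, given arbitrary bounded measurable $a,b$, the invertibility of $f_z,g_z$ lets me set $\phi:=a\circ f_z^{-1}$ and $\psi:=b\circ g_z^{-1}$, so that $\phi\circ f_z=a$ and $\psi\circ g_z=b$; conditional independence of $X$ and $Y$ given $Z$ evaluated at that single value $z$ then yields $\mathbb{E}[a(\epsilon_X)b(\epsilon_Y)]=\mathbb{E}[a(\epsilon_X)]\mathbb{E}[b(\epsilon_Y)]$, which is exactly $\epsilon_X\indep\epsilon_Y$.

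An equivalent, more computational route is to argue with densities (recall $X,Y\in\mathbb{R}$): the change-of-variables formula gives $p(x,y\mid z)=p_{\epsilon_X,\epsilon_Y}\!\left(f_z^{-1}(x),g_z^{-1}(y)\right)\,\lvert{f_z^{-1}}'(x)\rvert\,\lvert{g_z^{-1}}'(y)\rvert$, since the Jacobian of $(x,y)\mapsto(f_z^{-1}(x),g_z^{-1}(y))$ is block-diagonal, and likewise $p(x\mid z)=p_{\epsilon_X}(f_z^{-1}(x))\,\lvert{f_z^{-1}}'(x)\rvert$ and $p(y\mid z)=p_{\epsilon_Y}(g_z^{-1}(y))\,\lvert{g_z^{-1}}'(y)\rvert$. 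The Jacobian factors cancel, so $p(x,y\mid z)=p(x\mid z)p(y\mid z)$ for all $x,y$ if and only if $p_{\epsilon_X,\epsilon_Y}=p_{\epsilon_X}p_{\epsilon_Y}$, uniformly in $z$.

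I expect the only genuine subtlety to be bookkeeping of null sets: the ``for $P_Z$-a.e.\ $z$'' built into conditional independence a priori carries an exceptional set that could depend on the chosen $\phi,\psi$. The standard fix is to restrict to a countable measure-determining family of bounded functions (e.g.\ indicators of intervals with rational endpoints, which generate the Borel $\sigma$-algebra on $\mathbb{R}$), take the union of the corresponding countably many $P_Z$-null sets, and run the argument at any single $z$ outside this union. Modulo this routine care, the equivalence \eqref{eq:proxy-test} reduces to the bijection-invariance of (in)dependence noted above.
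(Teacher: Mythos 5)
Your proposal is correct, and your second, ``computational'' route is essentially the paper's own proof: the paper likewise applies the change-of-variables formula to $p(x|z)$, $p(y|z)$, and $p(x,y|z)$, notes that the joint Jacobian is diagonal because $f$ does not depend on $\epsilon_Y$ nor $g$ on $\epsilon_X$, and cancels the Jacobian factors so that the factorization of $p(x,y|z)$ is equivalent to that of $p(\epsilon_X,\epsilon_Y)$ (with the conditioning on $z$ dropped via $(\epsilon_X,\epsilon_Y)\indep Z$). Your first, test-function argument is a genuine strengthening of that: it replaces the existence of smooth, strictly positive densities (which the paper implicitly assumes in order to write the change-of-variables identities) by the bare facts that $f_z,g_z$ are bimeasurable bijections and that coordinate-wise bijections preserve and reflect independence, and it makes explicit the null-set bookkeeping --- running the converse at a single $z$ outside a countable union of exceptional sets --- that the paper's pointwise density manipulation glosses over. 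The only point worth stating explicitly in that version is that $f_z^{-1}$ and $g_z^{-1}$ are measurable (immediate here since $f,g$ are differentiable and monotone in their first argument on $\mathbb{R}$), so that $\phi:=a\circ f_z^{-1}$ and $\psi:=b\circ g_z^{-1}$ are legitimate test functions. In short: same core idea, with your first route buying generality (no density assumption) and rigor (explicit handling of the a.e.\ quantifier) at the cost of a slightly longer argument.
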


\begin{proof}
Since $f$ and $g$ are invertible, by the change of variables rule
we have:

\begin{align*}
p\left(x|z\right) & =p\left(\epsilon_{X}|z\right)\left|\frac{\partial f}{\partial\epsilon_{X}}\right|^{-1}=p\left(\epsilon_{X}\right)\left|\frac{\partial f}{\partial\epsilon_{X}}\right|^{-1}\\
p\left(y|z\right) & =p\left(\epsilon_{Y}|z\right)\left|\frac{\partial g}{\partial\epsilon_{Y}}\right|^{-1}=p\left(\epsilon_{Y}\right)\left|\frac{\partial g}{\partial\epsilon_{Y}}\right|^{-1}\\
p\left(x,y|z\right) & =p\left(\epsilon_{X},\epsilon_{Y}|z\right)\left|\det\left(\begin{array}{cc}
\frac{\partial f}{\partial\epsilon_{X}} & \frac{\partial f}{\partial\epsilon_{Y}}\\
\frac{\partial g}{\partial\epsilon_{X}} & \frac{\partial g}{\partial\epsilon_{Y}}
\end{array}\right)\right|^{-1}\\
 & =p\left(\epsilon_{X},\epsilon_{Y}\right)\left|\frac{\partial f}{\partial\epsilon_{X}}\cdot\frac{\partial g}{\partial\epsilon_{Y}}\right|^{-1}
\end{align*}

where we ignore $z$ due to the constraint $\left(\epsilon_{X},\epsilon_{Y}\right)\indep Z$.

Thus, $p\left(x|z\right)p\left(y|z\right)=p\left(x,y|z\right)$ if
and only if $p\left(\epsilon_{X}\right)p\left(\epsilon_{Y}\right)=p\left(\epsilon_{X},\epsilon_{Y}\right)$,
rendering $\epsilon_{X}$ and $\epsilon_{Y}$ marginally independent.
\end{proof}
For this reason, to test for $X\indep Y|Z$ we can instead test for
$\epsilon_{X}\indep\epsilon_{Y}$, which is progressively less challenging
thanks to the reduced conditioning variables.

Furthermore, since $\epsilon_{X}$ and $\epsilon_{Y}$ act as contributing
factors to $X$ and $Y$ in the supposed generating process without
being observed, we refer to them as \textit{latent representations}
of $X$ and $Y$. This is inspired by regression based methods, where
the ``residuals'' are essentially the additive noises in the data
generation, and obtained by subtracting the regression functions to
remove all information from $Z$, i.e., $r_{X}:=X-\mathbb{E}\text{\ensuremath{\left[X|Z\right]}}$
and $r_{Y}:=Y-\mathbb{E}\text{\ensuremath{\left[Y|Z\right]}}$. As
explained in Section\textbf{~}\ref{sec:Background}, while residuals
are easy to compute, they cannot completely remove all information
from $Z$ as intended without restrictive conditions, including additive
noise models and $Z$ being a confounder set of both $X$ and $Y$.
Consequently, the dependence between $r_{X}$ and $r_{Y}$, which
is possibly caused by the remaining influences from $Z$ to $r_{X}$
and $r_{Y}$, may not exactly entail $X\not\indep Y|Z$. Therefore,
the \textit{latent variables} generalize and extend from residuals
at being truly independent from $Z$ by design.

We note that when $p\left(x|z\right)$ and $p\left(y|z\right)$ are
smooth and strictly positive then the use of the cumulative distribution
functions (CDF)--$\epsilon_{X}:=\mathrm{F}\left(x|z\right),\epsilon_{Y}:=\mathrm{F}\left(y|z\right),f:=\mathrm{F}^{-1}\left(\epsilon_{X}|z\right),$
and $g:=\mathrm{F}^{-1}\left(\epsilon_{X}|z\right)$--is naturally
a candidate for \eqref{eq:proxy-test}. Notably, the cumulative distribution
functions for the alternative unconditional test has also been employed
in \cite{petersen2021testing}, where quantile regression is used
to estimate the cumulative distribution functions. However, in this
study, Lemma 1 emphasizes the application of a more generic invertible
transformation that is not restricted to CDFs, which can be parametrized
with NFs.

\subsection{Conditional Normalizing Flows for Latent Variable Inference}

In this sub-section we explain in detail the conditional normalizing
flows (CNF) models used to infer the latent representations of $X$
and $Y$ conditional on $Z$, which will be used for the proxy unconditional
test as in \eqref{eq:proxy-test}. For brevity, since the models for
$X$ and $Y$ are identically implemented except for their learnable
parameters, we only describe the CNF module for $X$ and the module
for $Y$ follows accordingly.

\subsubsection{Unconditional Normalizing Flows Modeling}

Normalizing Flows have been progressively developed in the last several
years since they were defined in \cite{tabak2010density} and made
popular in \cite{rezende2015variational,dinh2014nice}. However, many
of NF methods mainly concern with high dimensional data where interactions
between dimensions are required for the invertible transformation
to be possible, for example \cite{dinh2014nice,dinh2016density,kingma2016improved,papamakarios2017masked,kingma2018glow}.

Conversely, in this study we are interested in transformations of
one dimensional (scalar) variables. That being said, there also exists
NFs for scalars, for example, CDF-based flows, mixture of logistics
flows \cite{papamakarios2021normalizing}, splines \cite{muller2019neural},
or nonlinear squared flow \cite{ziegler2019latent}. For \textbf{LCIT},
in particular, we demonstrate the usage of CDF flows thanks to their
simplicity with no constraint. However, it should be noted that any
valid alternative NFs for scalar variables can be naturally adapted
into our solution.

The essence of CDF flows begins with the observation that the CDF
of any strictly positive density function (e.g., Gaussian, Laplace,
or Student's t distributions) is differentiable and strictly increasing,
hence the positively weighted combination of any set of these distributions
is also a differentiable and strictly increasing function, which entails
invertibility. This allows us to parametrize the flow transformation
in terms of a mixture of Gaussians \cite{papamakarios2021normalizing},
which is known as a universal approximator of any smooth density \cite{goodfellow2016deep}.
Therefore, we can approximate any probability density function to
arbitrary non-zero precision, given a sufficient number of components.

More concretely, starting with an unconditional Gaussian mixture based
NF, we denote $k$ as the number of components, along with $\left\{ \mu_{i},\sigma_{i}\right\} _{i=1}^{k}$
and $\left\{ w_{i}\right\} _{i=1}^{k}$ as the parameters and the
weight for each component in the mixture of univariate Gaussian densities
$p\left(x\right)=\sum_{i=1}^{k}w_{i}\mathcal{N}\left(x;\mu_{i},\sigma_{i}^{2}\right)$,
where $w_{i}\geq0$, $\sum_{i=1}^{k}w_{i}=1$. Subsequently, the invertible
mapping $\mathbb{R}\mapsto\left(0,1\right)$ is defined as

\begin{equation}
u\left(x\right)=\sum_{i=1}^{k}w_{i}\Phi_{i}\left(x\right)\label{eq:transform}
\end{equation}

where $\Phi_{i}\left(x\right)$ is the cumulative distribution function
of the $i$-th Gaussian component, i.e., $\Phi_{i}\left(x\right)=\int_{\infty}^{x}\mathcal{N}\left(t;\mu_{i},\sigma_{i}^{2}\right)dt$.

Along with the fact that $u\in\left(0,1\right)$, this transformation
entails that $u$ has a standard uniform distribution:

\begin{align}
p\left(u\right) & =p\left(x\right)\left|\frac{\partial u}{\partial x}\right|^{-1}\nonumber \\
 & =p\left(x\right)\left|\sum_{i=1}^{k}w_{i}\frac{\partial\Phi_{i}}{\partial x}\right|^{-1}\nonumber \\
 & =p\left(x\right)\left|\sum_{i=1}^{k}w_{i}\mathcal{N}\left(x;\mu_{i},\sigma_{i}^{2}\right)\right|^{-1}\nonumber \\
 & =p\left(x\right)\left|p\left(x\right)\right|^{-1}\nonumber \\
 & =1\label{eq:uniform}
\end{align}

Following this reasoning, the transformation \eqref{eq:transform}
maps a variable $x$ with Gaussian mixture density, which is multimodal
and complex, into a simpler standard uniform distribution. Moreover,
due to its monotonicity, it is also possible to reverse the process
to generate new $x$ after sampling $u$, though it is not necessary
in the considering application of CI testing.

\RestyleAlgo{ruled}

\begin{algorithm}[t]
\caption{The latent inference algorithm.\label{alg:Training-algorithm}}

\textbf{Input:} Empirical samples $\mathcal{D}=\left\{ \left(x_{i},z_{i}\right)\right\} _{i=1}^{n}$
of $\left(X,Z\right)$ as well as additional hyper-parameters for
training, for example the learning rate $\eta$.

\textbf{Output:} The corresponding latents $\epsilon_{X}$ of $X$
given $Z$.
\begin{enumerate}
\item Initialize neural networks $\mathrm{MLP}_{\mu}$, $\mathrm{MLP}_{\log\sigma^{2}}$,
and $\mathrm{MLP}_{w}$. Denote the union set of their parameters
as $\theta$.
\item Repeat until converge:
\begin{enumerate}
\item Sample a mini-batch $\mathcal{B}$ of $b$ samples from $\mathcal{D}$.
\item Compute the mini-batch means, variances, and weights of the conditional
Gaussian mixtures:

\begin{align*}
\mu & :=\mathrm{MLP}_{\mu}\left(z_{\mathcal{B}}\right)\\
\sigma^{2} & :=\exp\left(\mathrm{MLP}_{\log\sigma^{2}}\left(z_{\mathcal{B}}\right)\right)\\
w & :=\mathrm{MLP}_{w}\left(z_{\mathcal{B}}\right)
\end{align*}

\item Compute the mini-batch log-likelihood:

\[
\mathcal{L}:=\frac{1}{b}\sum_{i\in\mathcal{B}}\log w_{i}\cdot\mathcal{N}\left(x_{i};\mu_{i},\sigma_{i}^{2}\right)
\]

\item Update parameters using gradient ascent:

\[
\theta:=\theta+\eta\nabla_{\theta}\mathcal{L}
\]

\end{enumerate}
\item Infer the latents for the whole dataset $\mathcal{D}$:

\begin{align*}
\mu & :=\mathrm{MLP}_{\mu}\left(z_{\mathcal{D}}\right)\\
\sigma^{2} & :=\exp\left(\mathrm{MLP}_{\log\sigma^{2}}\left(z_{\mathcal{D}}\right)\right)\\
w & :=\mathrm{MLP}_{w}\left(z_{\mathcal{D}}\right)\\
u & :=w\cdot\Phi\left(x_{\mathcal{D}};\mu,\sigma^{2}\right)\\
\epsilon_{X} & :=\Phi^{-1}\left(u\right)
\end{align*}

\item Return $\epsilon_{X}$.
\end{enumerate}
\end{algorithm}

\subsubsection{From Unconditional to Conditional Normalizing Flows}

Next, to extend unconditional NF to conditional NF, we parametrize
the weights, means, as well as variances of the Gaussian components
as functions of $z$ using neural networks. To be more specific, we
parametrize $\mu\left(z\right)\in\mathbb{R}^{k}$ as a simple Multiple
Layer Perceptron (MLP) with real-valued outputs. As for $\sigma^{2}$,
since it is constrained to be positive, we instead model $\log\sigma^{2}\left(z\right)\in\mathbb{R}^{k}$
with an MLP, similarly as $\mu\left(z\right)$. Finally, for the weights,
which must be non-negative with sum of one, $w_{i}\left(z\right)\in\left(0,1\right)^{k}$
is parametrized with an MLP with the Softmax activation function for
the last layer. These steps can be summarized as follows:

\[
u\left(x,z\right):=\sum_{i=1}^{k}w_{i}\left(z\right)\Phi_{i}\left(x|z\right)
\]

It is worth emphasizing that this translation naturally preserves
the property similarly to \eqref{eq:uniform}. In another word, $u\left(x,z\right)\in\left(0,1\right)$
and

\begin{align*}
p\left(u|z\right) & =p\left(x|z\right)\left|\frac{\partial u}{\partial x}\right|^{-1}=1\\
p\left(u\right) & =\int p\left(u|z\right)p\left(z\right)dz=\int p\left(z\right)dz=1
\end{align*}

Therefore, $U$ is both conditionally and marginally standard uniform
regardless of the value of $z$, making $U$ unconditionally independent
of $Z$.

Moreover, we introduce an additional flow that depends on $u$ only.
This flow adopts the inverse cumulative distribution function (iCDF)
of the standard Gaussian distribution, to transform $u$ from a standard
uniform variable to a standard Gaussian variable, which is the final
``latent'' variable we use for the surrogate test in \eqref{eq:proxy-test}:

\[
\epsilon\left(u\right):=\Phi^{-1}\left(u\right)
\]

\RestyleAlgo{ruled}

\begin{algorithm}[t]
\caption{The proposed \textbf{L}atent representation based \textbf{C}onditional
\textbf{I}ndependence \textbf{T}esting (\textbf{LCIT}) algorithm.\label{alg:LCIT}}

\textbf{Input:} Empirical samples $\mathcal{D}=\left\{ \left(x_{i},y_{i},z_{i}\right)\right\} _{i=1}^{n}$
of $\left(X,Y,Z\right)$ and the significance level $\alpha$.

\textbf{Output:} The $p$-value and whether $X\indep Y|Z$ or not.
\begin{enumerate}
\item Use Algorithm~\ref{alg:Training-algorithm} to infer the latents
$\epsilon_{X}$ and $\epsilon_{Y}$.
\item Calculate the test statistic and $p$-value:

\begin{align*}
r & :=\frac{\mathrm{cov}\left(\epsilon_{X},\epsilon_{Y}\right)}{\sigma_{\epsilon_{X}}\sigma_{\epsilon_{Y}}}\\
t & :=\frac{1}{2}\ln\frac{1+r}{1-r}\\
p\text{-value} & :=2\left(1-\Phi\left(\left|t\right|\sqrt{n-3}\right)\right)
\end{align*}

\item Return $p$-value and

\[
\text{Conclusion}:=\begin{cases}
X\indep Y|Z & \text{if }p\text{-value}>\alpha\\
X\not\indep Y|Z & \text{if }p\text{-value \ensuremath{\leq\alpha}}
\end{cases}
\]
\end{enumerate}
\end{algorithm}

\subsubsection{Learning Conditional Normalizing Flows}

Similarly to conventional NFs, we adopt the Maximum Likelihood Estimation
(MLE) framework to learn the CNF modules for $X$ and $Y$.

With a fixed number of components $k$, we denote $\theta$ as the
total set of parameters to be learned, which includes the parameters
of the neural networks $\mu\left(z\right)$, $\log\sigma^{2}\left(z\right)$,
and $w\left(z\right)$. Then, the conditional likelihood of $X$ is
given by

\begin{align*}
p_{\theta}\left(x|z\right) & =\sum_{i=1}^{k}w_{i}\left(z\right)\mathcal{N}\left(x;\mu_{i}\left(z\right),\sigma_{i}^{2}\left(z\right)\right)
\end{align*}

and we aim to maximize the log-likelihood of observed $X$ conditioned
on $Z$ over the space of $\theta$:

\begin{align*}
\mathcal{L}\left(\theta\right) & :=\frac{1}{n}\sum_{i=1}^{n}\log p_{\theta}\left(x_{i}|z_{i}\right)\\
\theta_{\text{MLE}}^{*} & :=\argmax_{\theta}\mathcal{L}\left(\theta\right)
\end{align*}

where $\left\{ \left(x_{i},z_{i}\right)\right\} _{i=1}^{n}$ is the
set of $n$ observed samples of $\left(X,Z\right)$.

Subsequently, a gradient based optimization framework can be applied
to learn $\theta$ since $\mathcal{L}$ is fully differentiable. In
summary, Algorithm~\ref{alg:Training-algorithm} highlights the main
steps of the training process.

\subsection{Marginal Independence Test for the Latents}

For two jointly Gaussian variables\footnote{While our procedure only constrains $\left(\epsilon_{X},\epsilon_{Y}\right)$
to be marginally Gaussian, the experiments show that it is still robust
in a wide range of scenarios.}, their independence is equivalent to zero Pearson's correlation coefficient.
Therefore, we can resort to Fisher's transformation to get the test
statistic, which has approximate Gaussian distribution under the null
hypothesis, hence the closed form of the $p$-value is available \cite{hotelling1953new}.

To be more specific, first we calculate the Pearson's correlation
coefficient $r$ between $\epsilon_{X}$ and $\epsilon_{Y}$, then
turn it into the test statistic $t$ using the Fisher's transformation:

\begin{align*}
r & :=\frac{\mathrm{cov}\left(\epsilon_{X},\epsilon_{Y}\right)}{\sigma_{\epsilon_{X}}\sigma_{\epsilon_{Y}}}\\
t & :=\frac{1}{2}\ln\frac{1+r}{1-r}
\end{align*}

where the test statistic $t$ has an approximate Gaussian distribution
with mean $\frac{1}{2}\ln\frac{1+\rho}{1-\rho}$ and standard deviation
of $\frac{1}{\sqrt{n-3}}$ where $\rho$ is the true correlation between
$\epsilon_{X}$ and $\epsilon_{Y}$, and $n$ is the sample size.

Therefore, under the null hypothesis where $\rho=0$, the $p$-value
for $t$ can be calculated as the two-tail extreme region:

\[
p\text{-value}:=2\left(1-\Phi\left(\left|t\right|\sqrt{n-3}\right)\right)
\]

Finally, with a significance level of $\alpha$, the null hypothesis
$\mathcal{H}_{0}$ is rejected if $p\text{-value}<\alpha$, otherwise
we fail to reject $\mathcal{H}_{0}$ and have to conclude conditional
dependence $X\not\indep Y|Z$. To summarize, see Algorithm~\ref{alg:LCIT}
for the whole computation flow of the proposed \textbf{LCIT} method.

\begin{figure}[t]
\centering{}\includegraphics[width=0.95\columnwidth]{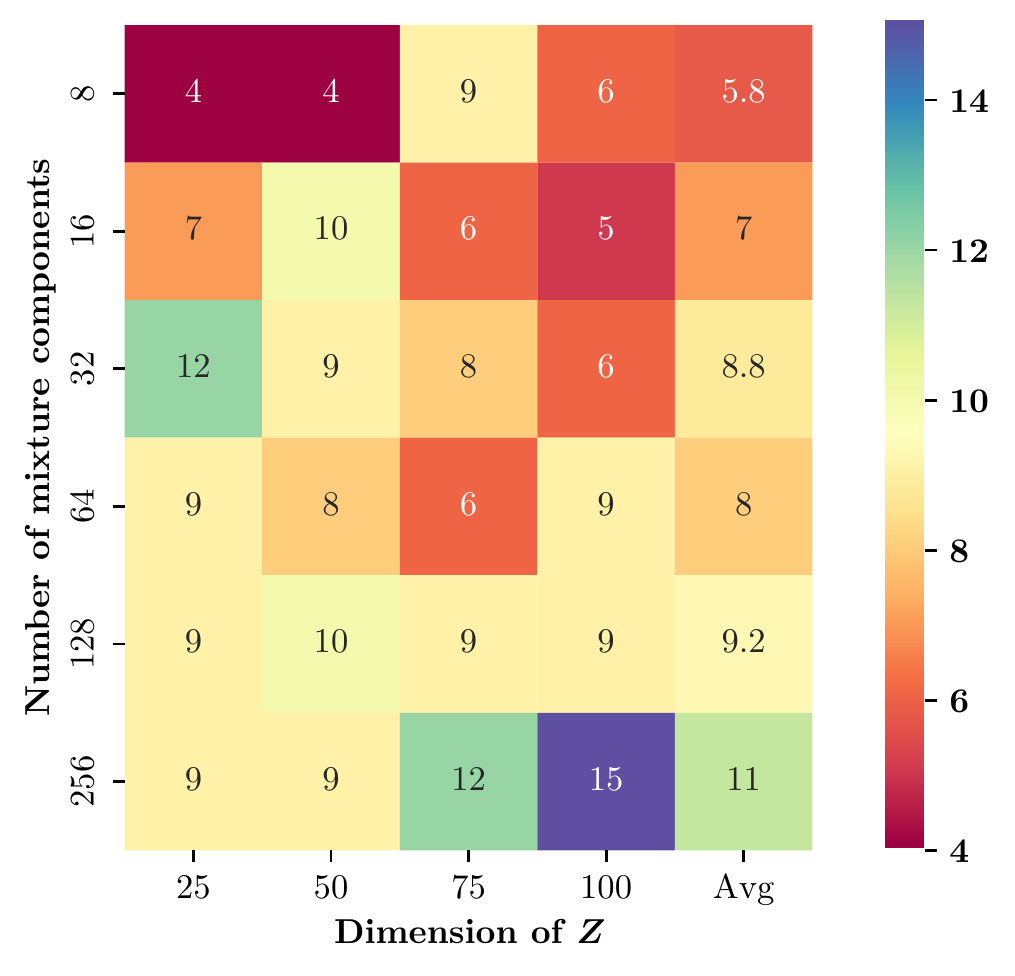}\caption{Hyper-parameter tuning results for the number of mixture components.
The number in each cell denotes the number of runs that have the respective
number of components as the best setting recorded. Each column sums
to $50$ independent runs.\label{fig:n_components}}
\end{figure}

\begin{figure}[t]
\centering{}\includegraphics[width=0.95\columnwidth]{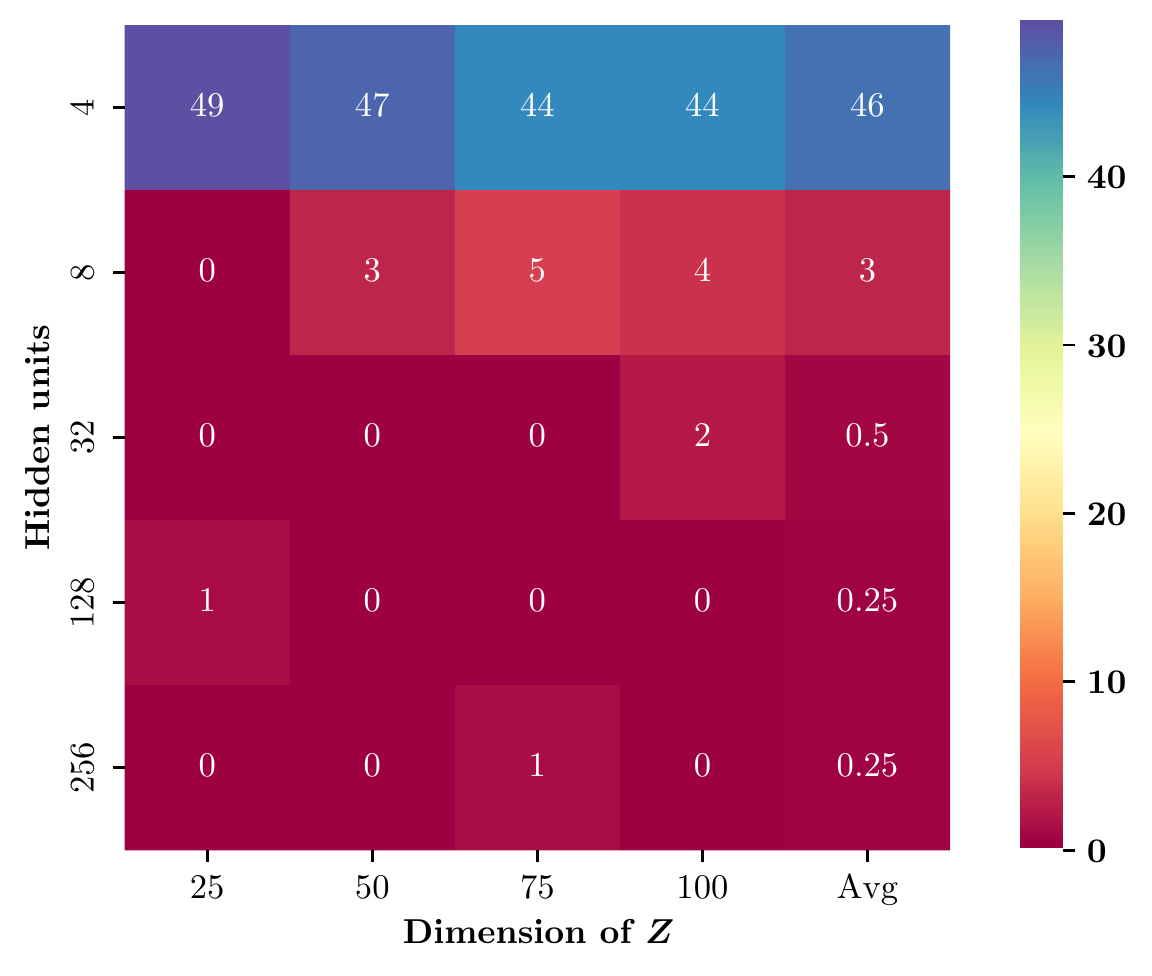}\caption{Hyper-parameter tuning results for the hidden layer size. The number
in each cell denotes the number of runs that have the respective number
of hidden units as the best setting recorded. Each column sums to
$50$ independent runs.\label{fig:hidden-size}}
\end{figure}

\begin{figure*}[t]
\centering{}\includegraphics[width=1\textwidth]{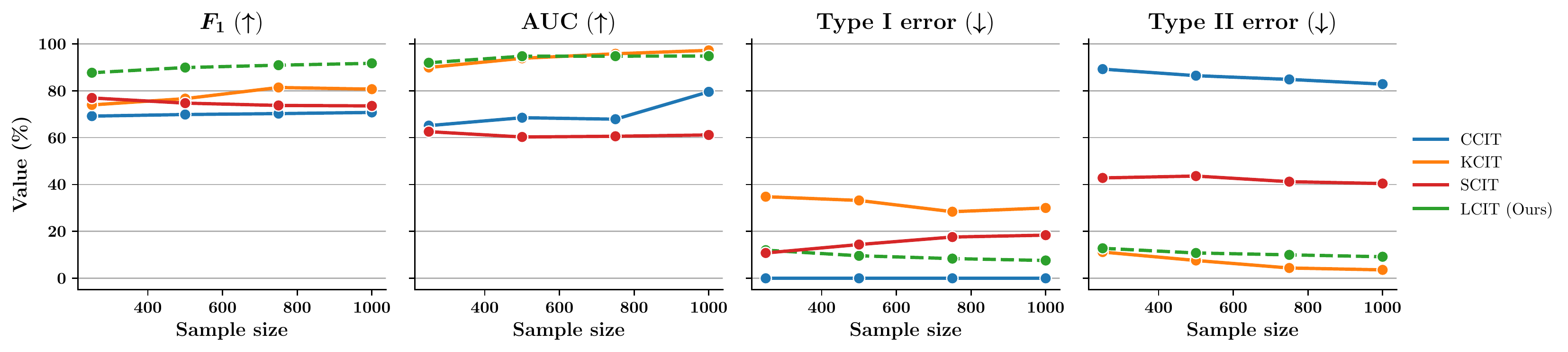}\caption{Conditional Independence Testing performance as a function of sample
size. The evaluation metrics are $F_{1}$ score, AUC (higher is better),
Type I and Type II error rates (lower is better). We compare our proposed
\textbf{LCIT} method with three baselines CCIT \cite{sen2017model},
KCIT \cite{zhang2012kernel}, and SCIT \cite{zhang2022residual}.
Our method achieves the best $F_{1}$ and AUC scores, and is the only
method that retains good scores in all metrics and sample sizes.\label{fig:sample-size}}
\end{figure*}

\begin{figure*}[t]
\centering{}\includegraphics[width=1\textwidth]{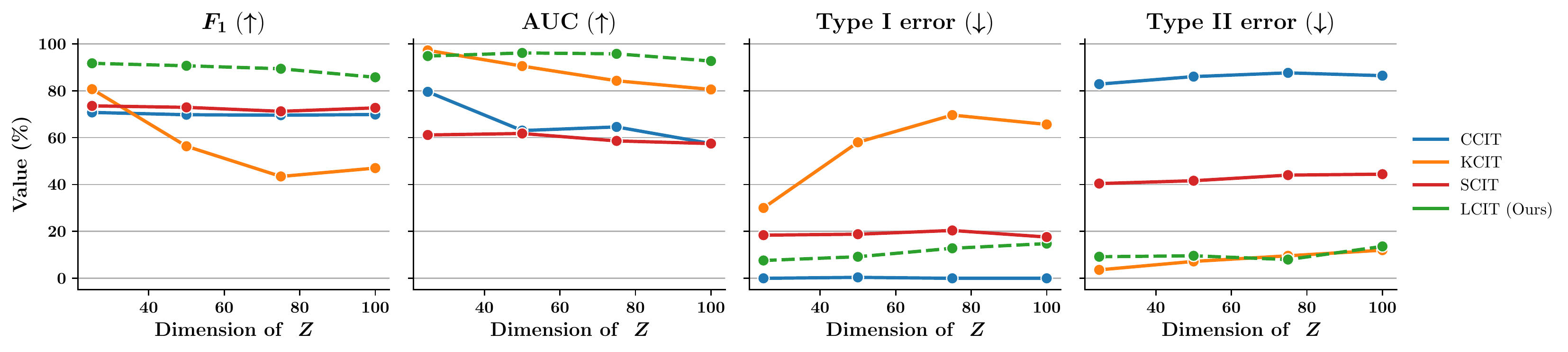}\caption{Conditional Independence Testing performance as a function of dimensionality.
The evaluation metrics are $F_{1}$ score, AUC (higher is better),
Type I and Type II error rates (lower is better). We compare our proposed
\textbf{LCIT} method with three baselines CCIT \cite{sen2017model},
KCIT \cite{zhang2012kernel}, and SCIT \cite{zhang2022residual}.
Our method achieves the best $F_{1}$ and AUC scores, and is the only
method that retains good scores in all metrics and dimensionalities.\label{fig:dimension}}
\end{figure*}

\section{Experiments\label{sec:Experiments}}

\inputencoding{latin9}\begin{figure}[t]
\centering{}\includegraphics[width=1\columnwidth]{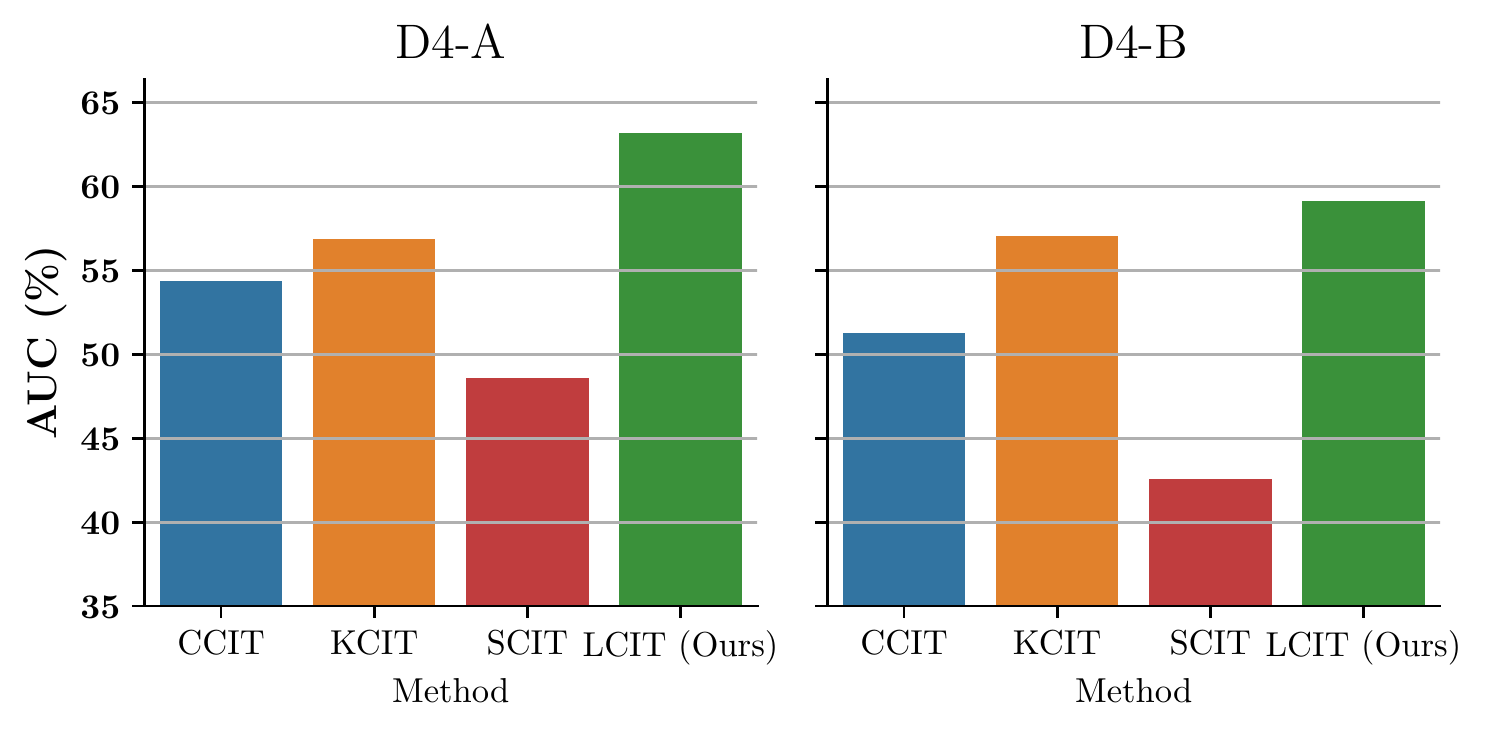}\caption{Conditional Independence Testing performance on real datasets. The
evaluation metrics is AUC (higher is better). We compare our proposed
\textbf{LCIT} method with three baselines CCIT \cite{sen2017model},
KCIT \cite{zhang2012kernel}, and SCIT \cite{zhang2022residual}.
\textbf{LCIT} surpasses all other methods, especially SCIT in large
margins.\label{fig:real-data}}
\end{figure}

\subsection{Setup}

\subsubsection{Methods}

We demonstrate the effectiveness of the \textbf{LCIT} in conditional
independence testing tasks on synthetic and real data against popular
and recent state-of-the-art methods across different approaches. More
specifically, we consider the Kernel-based KCIT\footnote{We use the KCIT implementation from the CMU causal-learn package:
\url{https://github.com/cmu-phil/causal-learn}} method \cite{zhang2012kernel} as a popular competitor, the recently
emerged residual similarity based SCIT\footnote{We follow the authors' original source code in Matlab: \url{https://github.com/Causality-Inference/SCIT}}
approach \cite{zhang2022residual}, and lastly the classification
based CCIT\footnote{We use the implementation from original authors: \url{https://github.com/rajatsen91/CCIT}}
algorithm \cite{sen2017model}. Regarding the configurations, we use
the default parameters recommended by respective baseline methods.

Additionally, for all methods, we first apply a standard normalization
step for each of $X$, $Y$, and $Z$ before performing the tests.

Specifically for \textbf{LCIT}, we parametrize $\mathrm{MLP}_{\mu}$,
$\mathrm{MLP}_{\log\sigma^{2}}$, and $\mathrm{MLP}_{w}$ with neural
networks of one hidden layer, Rectifier Linear Unit (ReLU) and Batch-normalization
\cite{ioffe2015batch} activation functions. The hyper-parameters
specifications and analyses are considered in the followings.

\subsubsection{Training CNFs}

We implement the CNF modules using the PyTorch framework \cite{paszke2019pytorch}.
Each CNF module is trained using the Adam optimization algorithm with
fixed learning rate of $5\times10^{-3}$, weight decay of $5\times10^{-5}$
to help regularize the model's complexity, and batch size of $64$.

\subsubsection{Data processing}

Before feeding data into the CNF modules, apart from data standardization
as other methods, we additionally handle outliers by clipping the
data to be in between the $2.5\%$ and $97.5\%$ quantiles of each
dimension, which helps stabilize the training process since neural
networks can be very sensitive to extreme values.

Subsequently, the input dataset is divided into training and validation
sets with a ratio of $70/30$, where the training portion is used
to learn parameters and the validation set enables early stopping.
We notice that typically, the training process only requires under
$20$ training epochs.

\subsubsection{Hyper-parameters}

The most important hyper-parameters presented in our models are the
number of Gaussian mixture components and the hidden layer sizes,
which together determine the expressiveness of the CNF. To examine
which set of configuration works best, we furthermore perform hyper-parameter
tuning on these two factors using the Optuna framework \cite{akiba2019optuna}.

Specifically, we simulate $50$ random datasets in the same manner
as in sub-section \ref{subsec:Synthetic-data}, both CI and non-CI
labels, for each number of dimensions of $Z$ varying in $\left\{ 25,50,75,100\right\} $.
With each dataset we execute $20$ Optuna trials to find the best
configuration, where the number of components varies in $\left\{ 8,16,32,64,128,256\right\} $
and the number of units in the hidden layer takes value in $\left\{ 4,8,16,32,64,128,256\right\} $.
The objective is the sum of the maximum log-likelihoods of $X$ and
$Y$ conditioned on $Z$. Finally, the most advantageous setting by
far is recorded.

The summary of best configurations found via hyper-parameter tuning
is showed in Figure~\ref{fig:n_components} and Figure~\ref{fig:hidden-size}.
It is clear that the more mixture components is usually preferable,
while a small number of hidden units is sufficiently effective to
retain a high performance. Based on these, we fix $32$ Gaussian components
for all other experiments in order to keep computations low with only
a small loss of performance compared with larger numbers of components.
On another hand, four hidden units is also used for every experiments
since it is both computationally efficient and sufficiently expressive.

\subsection{Synthetic data\label{subsec:Synthetic-data}}

Following several closed related studies \cite{zhang2012kernel,doran2014permutation,strobl2019approximate,bellot2019conditional},
we randomly simulate the datasets following the post-nonlinear additive
noise model:

\begin{align*}
\mathcal{H}_{0} & :Z:=f\left(X\otimes a+E_{f}\right),\quad Y:=g\left(\left\langle Z,b\right\rangle +E_{g}\right)\\
\mathcal{H}_{1} & :Z:=f\left(X\otimes a+E_{f}\right),\quad Y:=g\left(\left\langle Z,b\right\rangle +cX+E_{g}\right)
\end{align*}

with $X:=2E_{X}$ where $E_{X}$, $E_{f}$, and $E_{g}$ are independent
noise variables following the same distribution randomly selected
from $\left\{ \mathcal{U}\left(-1,1\right),\mathcal{N}\left(0,1\right),\mathrm{Laplace}\left(0,1\right)\right\} $.
The $\otimes$ and $\left\langle \cdot,\cdot\right\rangle $ denote
the outer and inner products, respectively. Additionally, $a,b\sim\mathcal{U}\left(-1,1\right)^{d}$,
$c\sim\mathcal{U}\left(1,2\right)$, and $f,g$ are uniformly chosen
from a rich set of mostly non-linear functions $\left\{ \alpha x,x^{2},x^{3},\tanh x,x^{-1},e^{-x},\frac{1}{1+e^{-x}}\right\} $\footnote{The input is appropriately scaled and translated before being fed
into each function.}.

\subsubsection{Effect of different sample sizes}

To study the performance of the \textbf{LCIT} against alternative
tests across different sample sizes, we fix the dimensionality $d$
of $Z$ at 25 and vary the sample size from 250 to 1,000. We measure
the CI testing performance under four different metrics, namely the
$F_{1}$ score (higher is better), Area Under the Receiver Operating
Characteristic Curve (AUC, higher is better), as well as Type I and
Type errors II (lower is better). More specifically, Type I error
refers to the proportion of false rejections under $\mathcal{H}_{0}$,
and Type II error reflects the proportion of false acceptances under
$\mathcal{H}_{1}$. These metrics are evaluated using 250 independent
runs for each combination of method, sample size, and label. Additionally,
for $F_{1}$ score, Type I, and Type II errors, we adopt the commonly
used significance level of $\alpha=0.05$.

The result is reported in Figure~\eqref{fig:sample-size}, which
shows that our method is the only one achieving good and stable performance
in all four evaluation criterions. Remarkably, \textbf{LCIT} scores
the highest in terms of $F_{1}$ measure, surpassing all other tests
with clear differences. Furthermore, \textbf{LCIT}, along with KCIT,
also obtains the highest AUC scores, approaching closely to $100\%$
as more samples are used, and marginally outperforms the two recent
state-of-the-arts CCIT and SCIT at considerable margins. For both
Type I and II errors, \textbf{LCIT} stably earns the second lowest
in overall, at around $10\%$.

Meanwhile, KCIT achieves the lowest Type II errors, but its Type I
errors are completely larger than those of all other methods, suggesting
that KCIT majorly returns conditional dependence as output. In contrary,
while CCIT is able to obtain virtually no error in Type I, its use
becomes greatly unreliable when viewed from the perspective of Type
II errors. This indicates that CCIT in general usually favors outputting
conditional independence as the answer.

\subsubsection{Effect of high dimensional conditioning sets}

In Figure~\eqref{fig:dimension} we study the change in performance
of \textbf{LCIT} as well as other methods in higher dimensional settings.
Concretely, we fix the sample size at 1,000 samples and increase the
dimension of $Z$ from 25 to 100. The result shows that our method
consistently outperforms other state-of-the-arts as the dimensionality
of $Z$ increases, as evidenced by the highest AUC scores in overall,
leaving CCIT and SCIT by up to roughly 40 units, while having comparably
low error rates.

Generally, we can see a visible decline in performance of all methods,
especially KCIT and CCIT. The AUC score of KCIT drops rapidly by 20
units from the smallest to largest numbers of dimensions, whereas
its $F_{1}$ score deteriorates quickly to half of the initial value,
and the Type I errors are always the highest among all considered
algorithms. On another hand, CCIT also has vanishing Type I errors
but exceedingly high Type II errors similarly to that in Figure~\eqref{fig:sample-size}.

\subsection{Real data}

To furthermore demonstrate the robustness of the proposed \textbf{LCIT}
test, we evaluate it against other state-of-the-arts in CI testing
on real datasets.

In general, real datasets of triplets $\left(X,Y,Z\right)$ for CI
test benchmarking are not available, so we have to resort to data
coming with ground truth networks instead, which are still relatively
rare and there are few consensus benchmark datasets.

In this study, we examine two datasets from the Dialogue for Reverse
Engineering Assessments and Methods challenge\footnote{\url{https://dreamchallenges.org}},
ninth edition (referred to as DREAM4) \cite{marbach2009generating,marbach2010revealing},
where the data is publicly accessible with the ground-truth gene regulatory
networks presented. The challenge's objective is to recover the gene
regulatory networks given their gene expression data only. Therefore,
the data sets are well fit to the application of our method and CI
tests in general.

Regarding data description, each data set includes a ground truth
transcriptional regulatory network of \textit{Escherichia coli} or
\textit{Saccharomyces cerevisiae}, along with observations of gene
expression measurements. We denote the two considered datasets as
D4-A and D4-B, where D4-A is from the first sub-challenge of the contest
that contains $10$ genes with $105$ gene expression observations,
whereas D4-B comes from the second sub-challenge and consists of $100$
genes complemented with $210$ gene expression samples.

Next, we extract conditional independent and conditional dependent
triplets $\left(X,Y,Z\right)$ from the ground truth networks. This
process is done based on the fact that if there is a direct connection
between two nodes in a network, then regardless of the conditioning
set, they remain conditionally dependent. Otherwise, the union of
their parent sets should d-separate all paths connecting them, rendering
them conditionally independent given the joint parents set \cite{pearl1998graphical}.
To create class-balance datasets, for D4-A, we collect $30$ conditional
independence and $30$ conditional dependence relationships, while
the number of relationships from D4-B are $50$ each.

The CI testing performance on the DREAM4 datasets is reported in Figure~\ref{fig:real-data}.
It can be seen that the results follow relatively consistently with
synthetic data scenarios, with \textbf{LCIT} being the best performer,
followed by KCIT. Meanwhile, CCIT and SCIT considerably underperform
with AUC scores around or under $50\%$, comparable to a fair-coin
random guesser.

\section{Conclusion and Future Work\label{sec:Conclusion-and-Future}}

\inputencoding{latin9}In this paper we propose a representation learning
approach to conditional independence testing called \textbf{LCIT}.
Through the use of conditional normalizing flows, we transform the
difficult conditional independence testing problem into an easier
unconditional independence testing problem. We showcase the performance
of our \textbf{LCIT} method via intensive experiments including synthetic
datasets of highly complex relationships, as well as real datasets
in bio-genetics. The empirical results show that \textbf{LCIT} performs
really well and is able to consistently outperform existing state-of-the-arts.

Conditional independence testing is a generic tool that serves as
the basis of a wide variety of scientific tasks, especially in causal
discovery. Therefore, the development of \textbf{LCIT} offers a promising
generic alternative solution for these problems and methods.

As for future perspectives, since the latent representation based
approach is first used in \textbf{LCIT}, it opens doors for further
scientific developments of conditional independence tests based on
representation learning, which are expected to greatly improve from
\textbf{LCIT} and are able to extend to more challenging scenarios
such as heterogeneity and missing data.
\balance

\bibliographystyle{plain}
\bibliography{icdm}

\end{document}